\DeclareMathOperator*{\argmin}{arg\,min}
\DeclareMathOperator{\tr}{tr}
\renewcommand{\Re}{\mathbb{R}}
\newcommand{\A}{\mathcal{A}}
\newtheorem{proposition}{Proposition}
\title{\LARGE \bf
Communication-Aware Map Compression \\for Online Path-Planning
}
\author{Evangelos Psomiadis, Dipankar Maity, Panagiotis Tsiotras
\thanks{The work was supported by the ARL grant ARL DCIST CRA W911NF-17-2-0181.}
\thanks{E. Psomiadis and P. Tsiotras are with the D. Guggenheim School of Aerospace Engineering, Georgia Institute of Technology, Atlanta, GA, 30332-0150, USA. Email:
       \{epsomiadis3, tsiotras\}@gatech.edu}%
\thanks{D. Maity is with the Department of Electrical and Computer Engineering, University of North Carolina at Charlotte, NC, 28223-0001, USA. Email:
       dmaity@charlotte.edu} %
}
\begin{document}
\maketitle
\thispagestyle{empty}
\pagestyle{empty}

\begin{abstract}
This paper addresses the problem of the communication of optimally compressed information for mobile robot path-planning.
In this context, mobile robots compress their current local maps to assist another robot in reaching a target in an unknown environment. 
We propose a framework that sequentially selects the optimal compression, guided by the robot's path, by balancing the map resolution and communication cost. 
Our approach is tractable in close-to-real scenarios and does not necessitate prior environment knowledge.
We design a novel decoder that leverages compressed information to estimate the unknown environment via convex optimization with linear constraints and an encoder that utilizes the decoder to select the optimal compression.
Numerical simulations are conducted in a large close-to-real map and a maze map and compared with two alternative approaches.
The results confirm the effectiveness of our framework in assisting the robot reach its target by reducing transmitted information, on average, by approximately 50\% while maintaining satisfactory performance.
\end{abstract}

\section{Introduction}
Advancements in the field of multi-robot decision-making enable teams of robots to carry out complex tasks such as search and rescue operations \cite{Sugiyama2005}, autonomous delivery \cite{salzman2020}, or even space missions \cite{Kesner2007}. 
These operations often center around collaborative navigation in unknown environments, where the robots engage in continuous information exchange.
However, to fully harness the potential of their communication network and optimize performance, the robots must be aware of their bandwidth limitations, and incorporate those in their control and decision-making process. 
The problem of multi-robot path-planning under bandwidth constraints is an active area of research and several approaches have been proposed, handling different aspects of the problem \cite{Gielis2022}. 

Prior robot control algorithms treated communication as an afterthought.
For example, in \cite{kepler2020}, given a data set, the authors assign metrics to assess the significance of data points and decide their communication for exploration missions. 
In a recent work for navigation/path-planning \cite{chang2023}, the authors compress a 3D Scene Graph, given the high-resolution path by keeping a specific number of nodes.
Our approach compresses the essential information for path-planning online, integrating the compressed map into the planning loop.

The coupled problem between compression (quantization) and control has been an active area of research for several decades in the controls community  \cite{delchamps1990stabilizing, brockett2000quantized,  nair2004stabilizability, kostina2019rate}. 
The choice of the optimal quantizer/compressor even for a given control objective (e.g., a quadratic cost function) is, however, an intractable problem \cite{fu2012lack, yüksel2019note}.
Instead of designing the optimal quantizer, an alternative approach is to select the best quantizer at each time from a given set of quantizers \cite{maity2021optimal, maity_quant}.
This results in a tractable linear program (LP). 
We adopt this approach, where a set of compressors/quantizers is available to the robots to compress their perception data before transmitting it to another robot in their team.

\textbf{Related Work:}
Our primary focus is determining what information to communicate in the context of a multi-agent navigation problem involving agents with different objectives. 
In \cite{unhelkar2016}, the authors introduce ConTaCT, a policy that addresses when to communicate information in multi-agent navigation scenarios by solving a decentralized Markov Decision Process with the team reward dependent on the joint action space. 
In \cite{Wu2011}, the agents communicate whenever there is an inconsistency in their shared belief.
In \cite{marcotte2020}, the authors address the problem of deciding what information to communicate using OCBC, an algorithm that employs forward simulations and a bandit-based combinatorial optimization to evaluate observations.
This approach can become computationally intractable with increasing the robots' field of view, as it increases the number of candidate observations to assess. 
Additionally, learning-based methods have also been employed. In \cite{Li2019GraphNN}, the authors propose an architecture comprising a Convolutional Neural Network and a Graph Neural Network that compress and communicate information among robots for decentralized sequential path-planning.

\textbf{Contributions:}
In this paper, we assume a team of mobile robots that autonomously choose the optimal way to compress map data to assist another robot navigate an unknown environment. 
Our approach does not require prior environment knowledge, like learning-based methods, and is tractable in large maps, regardless of the robot configurations.

We propose a novel decoder-encoder pair to estimate the unknown environmental occupancy values and select the optimal compression, utilizing a given set of compressors (equivalently, quantizers).
We validate the effectiveness of our framework through simulations conducted on both a large, real-world-like map and a maze map.
While our simulations involve a single pair of robots, our framework can be readily extended to multiple robots.

\section{Preliminaries: Grid world and Abstractions} \label{sec:preliminaries}

We assume that the environment is represented by an occupancy grid in 2D (or 3D). 
A robot with onboard sensing capability can observe a 2D occupancy grid of size $w \times h$ as shown in Figure~\ref{fig:occupancygrid}(\subref{fig:occupancygrid1}). 
The occupancy values of the grid cells are stored in vector $x \in \Re^{wh}$. The $j$ component of $x$, denoted by $[x]_j$, is in the range $[0,1]$ for all $j = 1, \ldots, wh$.
Here $[x]_j$ denotes the traversability of the cell, where $[x]_j = 1$ indicates an untraversable cell and $[x]_j = 0$ denotes a free cell.
A compressed representation of the occupancy grid is described by an \textit{abstraction}, as in Figure~\ref{fig:occupancygrid}(\subref{fig:occupancygrid2}).

\begin{figure}[t]
    \centering    
    \begin{subfigure}{0.3 \linewidth}
        \centering
        \includegraphics[trim = 0 0 120 0, clip, width = \linewidth]{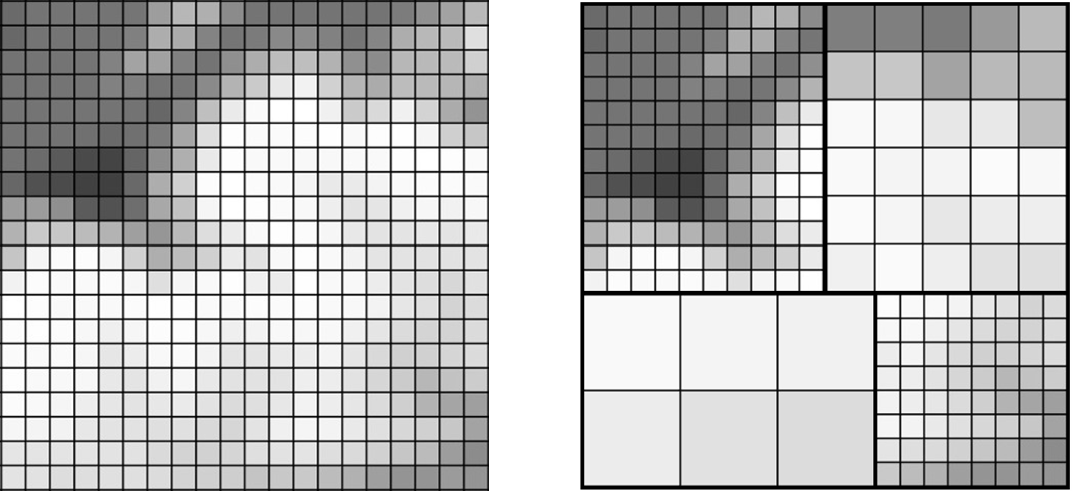}
        \put(-73,-9) {$\longleftarrow \hspace{.4 cm} w \hspace{.4 cm} \longrightarrow $}
        \put(-86, 1) {\rotatebox{90}{$\longleftarrow \hspace{.45 cm} h \hspace{.45 cm} \longrightarrow $}}
        \caption{}
        \label{fig:occupancygrid1}
    \end{subfigure} \hspace{ 1 cm}
    \begin{subfigure}{0.3 \linewidth}
        \includegraphics[trim = 120 0 0 0, clip, width = \linewidth]{OG2.png}
        \put(-73,-9) {\color{white}{$\longleftarrow \hspace{.4 cm} w \hspace{.4 cm} \longrightarrow $}}
        \caption{}
        \label{fig:occupancygrid2}
    \end{subfigure}
    \caption{(a) Full resolution occupancy grid; (b) Compressed (i.e., quantized) occupancy grid.}
    \label{fig:occupancygrid}
\end{figure}

Each abstraction is associated with a \textit{compression template} that generates a compressed representation of the occupancy grid.
This can be considered as a guide, indicating what and how the grid cells are going to be abstracted. 
The abstracted representation is a multi-resolution occupancy grid, where the occupancy of a compressed cell is determined by the occupancy values of the finest resolution cells that make up the compressed cell.
The occupancy value of a compressed cell can be determined using existing techniques, such as wavelets \cite{cowlagi2012multiresolution}, $k$-class trees \cite{kraetzschmar2004probabilistic} or information bottleneck methods \cite{larsson2020q}. 
For simplicity, in this study, we adopt the approach of computing the occupancy value of the compressed cell as the average of the underlying occupancy values of the finest resolution cells. This approach aligns with the principles of the information bottleneck method \cite{larsson2020q}.
Therefore, each abstraction can be thought of as a linear mapping ${\A}^{\theta}: [0,1]^{wh} \to [0,1]^{k_{\theta}}$, where $k_{\theta}\le wh$ is the number of cells in the compressed occupancy grid employing abstraction \({\theta}\). 
That is, for a given full resolution occupancy map $x \in [0,1]^{wh}$, the occupancy values of the cells in abstraction \({\theta}\) will be $o = \A^{\theta} x$, where $\A^{\theta} \in \Re^{k_{\theta} \times wh}$. 
Since we assume the occupancy value of a compressed cell to be the average of the occupancy values of the underlying finest resolution cells, the matrix $\A^{\theta}$ is row stochastic for all $\theta \in \Theta$, where $\Theta = \{1, \ldots, K\}$ is the set of available abstractions.

\subsection{Communication of Abstracted Environments} \label{sec:comm_of_abs}
Let $\Theta$ be known to both the Sender and Receiver robots. 
At every timestep $t$, the Sender selects an appropriate abstraction $\A^{\theta}$ to compress its observed occupancy grid and transmits it to the Receiver. 
Specifically, the Sender transmits the pair $(o_t, {\theta})$, where $o_t = \A^{\theta} x_t$, with $x_t$ representing the occupancy grid sensed by the Sender at time t. 
The receiver knows $\A^{\theta}$ and attempts to reconstruct $x_t$ from $o_t$. 

Let $n_m$ and $n_{i}$ denote the number of bits required to transmit an occupancy value ($[o]_j$) and an abstraction index (${\theta}$), respectively. 
Therefore, for abstraction \({\theta}\), the total number of bits $n_{\theta}$ required to transmit the abstracted grid is given by:
\begin{equation}\label{eq:bit_abstr}
n_{\theta} = k_{\theta} n_m + n_i.
\end{equation}
If the Sender were to send the full-resolution occupancy grid at each timestep, the required bits would be equal to $wh n_m$.

\section{Problem Formulation}

Consider a pair of mobile robots, a Seeker and a Supporter, that navigates through an unfamiliar environment  \(M \subset \Re^2\) repleted with static obstacles\footnote{
The framework extends in a straightforward manner to 3D environments.
}.
Let \(\textbf{p}_{A,t}\), \(\textbf{p}_{B,t} \in M\)  be the Seeker and Supporter's positions respectively at timestep \(t\), and let \(u_{t} \in U\) denote the Seeker's control action at time \(t\) selected from a finite set of control actions \(U\). 
The robots are equipped with sensors capable of observing a portion of the environment (local map) as they traverse it.
The Seeker's objective is to reach a designated target in minimum time by following a path generated by an online path-planning algorithm.
In contrast, the Supporter follows a predefined path and aims to assist the Seeker in achieving its objective by transmitting informative abstractions of its local map to the Seeker at each timestep. 
In this work, we do not adhere to a specific way to design the Supporter's path, but we consider it to be given, and prove the effectiveness of our algorithm regardless of it.
The Supporter's role can be likened to that of a drone assigned to reach a separate target or even a satellite in orbit, passing over the environment of the Seeker. 
Its path is determined a priori by a different objective and cannot be altered.
The proposed framework is shown in Figure~\ref{fig:flowchart}, and the roles of its components are delineated in Section \ref{sec:framework_arch}.

\begin{figure}[tb]
    \centering        
    {\includegraphics[width=0.85\linewidth]{./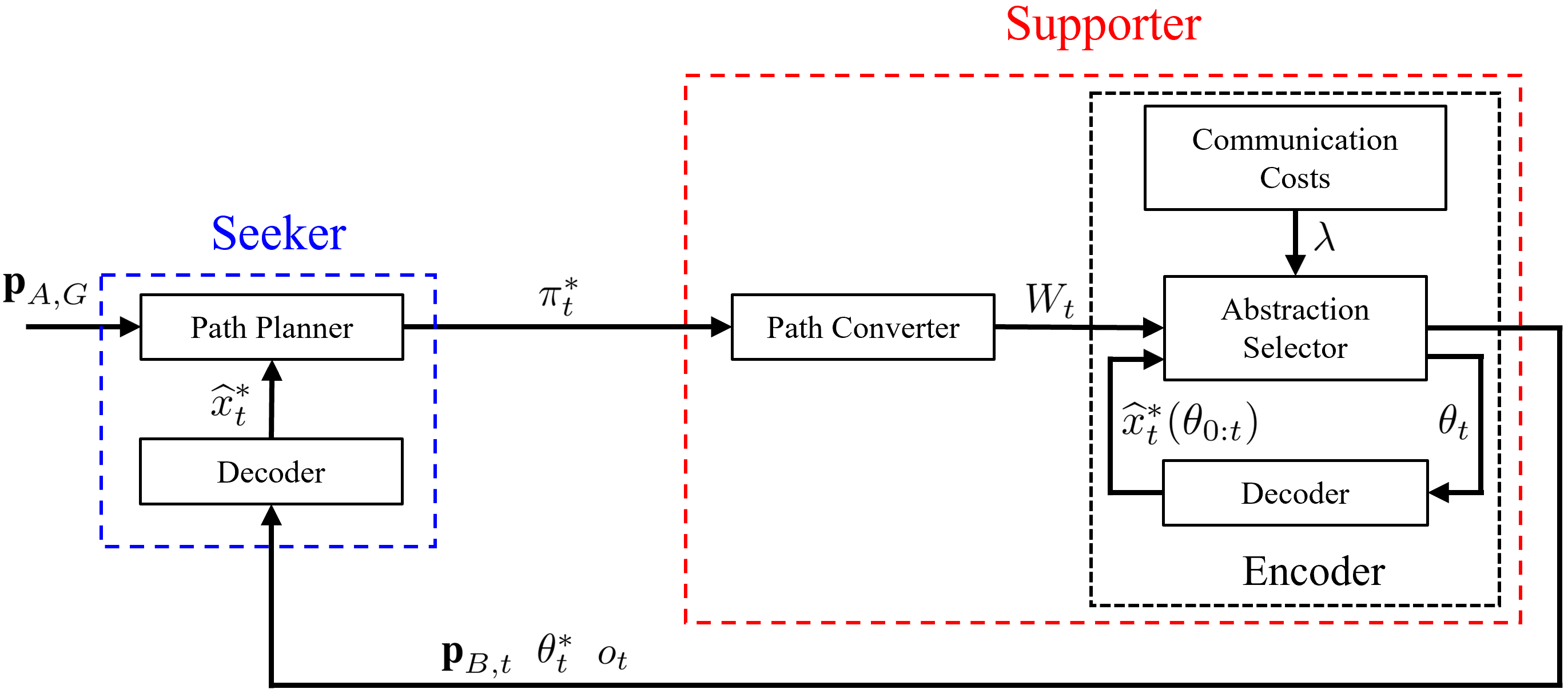}}
    \caption{Flowchart that presents the proposed framework's architecture at timestep \(t\). The Seeker shares its current optimal path \(\pi^*_t\) to the Supporter. The Supporter utilizes this to select the optimal abstraction of its local map \(\theta^*_t \in \Theta\) and sends it to the Seeker to aid in reaching its target.}
    \label{fig:flowchart}
\end{figure}

\subsection{Problem Statement}
Considering as inputs the initial positions of the Seeker and the Supporter, \(\textbf{p}_{A,0}\) and \(\textbf{p}_{B,0}\), along with the Supporter's predefined path in a global reference frame, we design a framework to online select the optimal abstraction \(\theta^*\), from a given set of abstractions \(\Theta\), to compress the Supporter's local map. 
The Supporter's encoder selects the abstractions, driven by the Seeker's transmitted path at every timestep. 
Meanwhile, the Seeker utilizes the accumulated measurements ($o_{0:t}$) to compute the control action \(u_t\) to reach its ultimate destination in minimum time (shortest path).

\section{Framework Architecture} \label{sec:framework_arch}
\subsection{Path Planner} \label{sec:path_planner}
Let \(G = (V,E)\) represent the graph associated with the occupancy grid environment \(M\), where \(V\) denotes the set of vertices and \(E\) the set of edges. 
Each vertex in \(V\) corresponds to a specific cell in \(M\) (with a slight abuse of notation, we will use \(\textbf{p}\) to denote both cell positions and graph vertices). 
Two vertices are deemed connected if the Seeker can move between them utilizing a control action \(u \in U\).
We assume that the time to traverse a cell is proportional to its occupancy value (difficulty to traverse) plus a constant (penalty for movement).
The cost of traversing a vertex is therefore given by \cite{larsson2021}:
\begin{equation}\label{eq:cell_cost}
    c_{\epsilon}  (\textbf{p})=     
    \begin{cases}
      x_{\textbf{p}} + a, & \text{if $\textbf{p} \in P_{\epsilon}$},\\
      N(\epsilon + a), & \text{otherwise},
    \end{cases}
\end{equation}
where \(x_{\textbf{p}} \in [0,1]\) is the occupancy value of the cell at position \(\textbf{p}\), \(a\) is a constant cost for traversing a cell, \(N\) is the total number of vertices or cells in $M$, and \(P_{\epsilon} = \{\textbf{p} \in V: x_{\textbf{p}} \leq \epsilon\}\) designates the set of cells meeting a feasibility condition, where \(\epsilon \in [0,1]\) is a scalar that defines cell feasibility.

Let \(\Pi\) denote the set of paths with the first element being the Seeker's current position \(\textbf{p}_{A,t}\) and the last element being its goal location \(\textbf{p}_{A,G}\). 
Then, the optimal path is given by:
\begin{equation}\label{eq:path}
    \pi^* = \argmin_{\pi \in \Pi}\sum_{\textbf{p} \in \pi}{c_{\epsilon} (\textbf{p})}.
\end{equation}

When \(\pi \subseteq P_{\epsilon}\), the path is referred to as an \(\epsilon\)-feasible path. 
By setting the second scale of \eqref{eq:cell_cost} larger than the cost of any feasible path, we exclude infeasible vertices, unless no feasible path is available. 
This ensures that the path-planning algorithm will always find a path.

Figure \ref{fig:example}(\subref{fig:ex_environ}) presents a simple example of an occupancy grid environment with obstacles, while Figure \ref{fig:example}(\subref{fig:ex_graph}) illustrates the associated graph. The graph is constructed by considering the set of control actions \(U = \)\{UP, DOWN, LEFT, RIGHT\}, and it showcases the optimal path (computed, for instance, using Dijkstra's graph search algorithm \cite{Dijkstra1959}, with cost function \eqref{eq:cell_cost}).

\begin{figure}[tb]
     \centering
     \begin{subfigure}[b]{0.29\linewidth}
         \centering
         \includegraphics[width=\textwidth]{./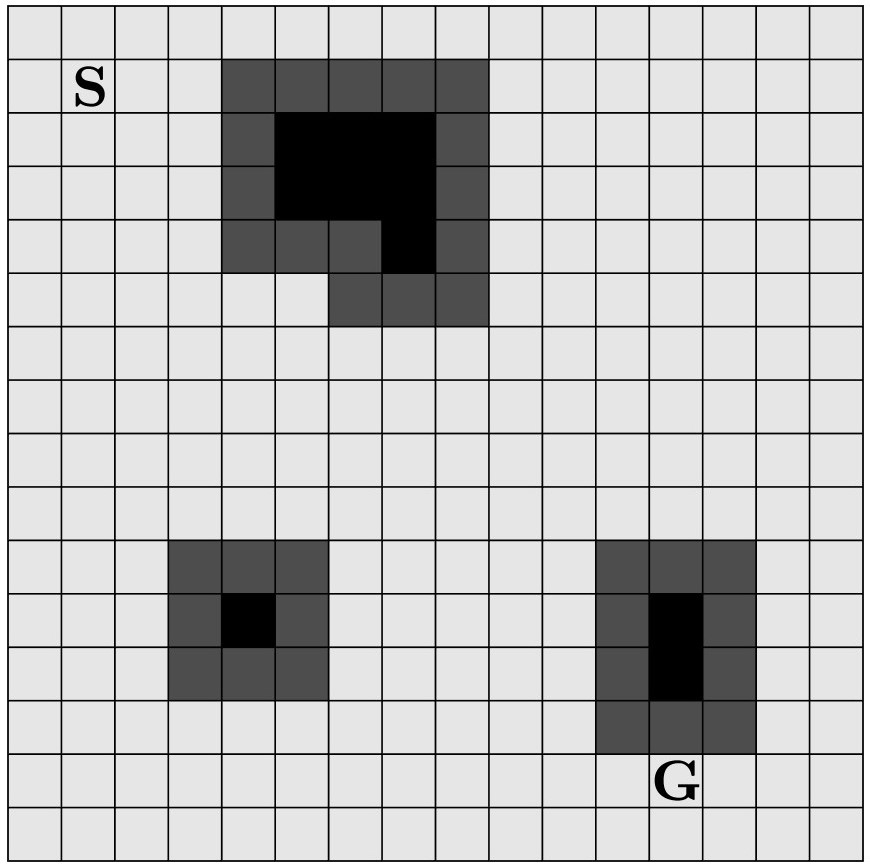}
         \caption{}
         \label{fig:ex_environ}
     \end{subfigure}
     \begin{subfigure}[b]{0.29\linewidth}
         \centering
         \includegraphics[width=\textwidth]{./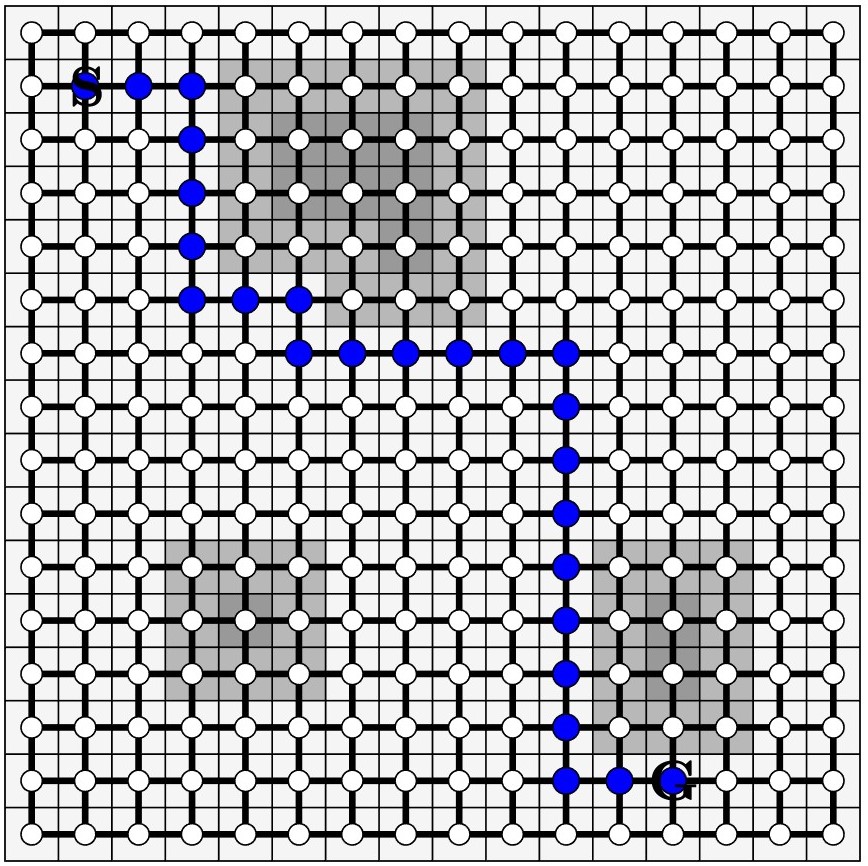}
         \caption{}
         \label{fig:ex_graph}
     \end{subfigure}
     \begin{subfigure}[b]{0.367\linewidth}
         \centering
         \includegraphics[width=\textwidth]{./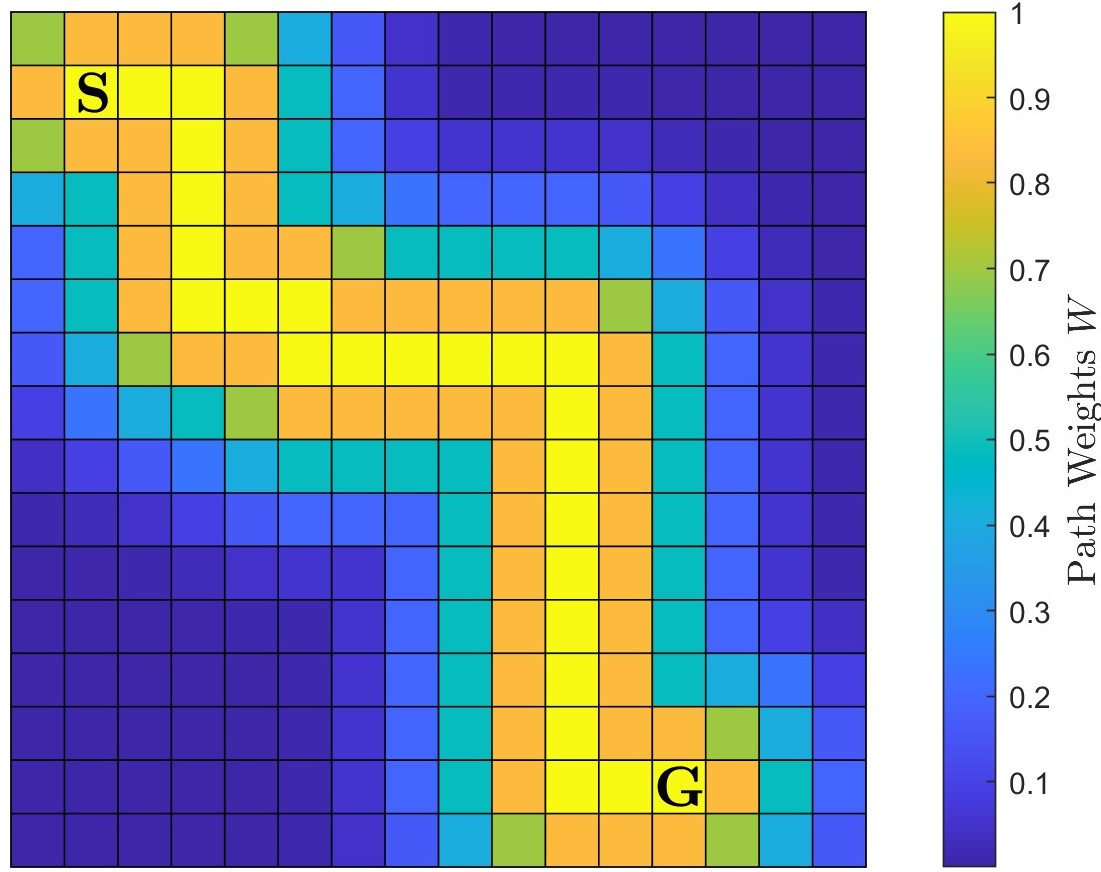}
         \caption{}
         \label{fig:ex_weights}
     \end{subfigure}
        \caption{(a) Example of a discretized environment. \(\textbf{S}\) denotes the starting cell while \(\textbf{G}\) denotes the target; (b) Associated graph of the discretized environment employing the set of actions \(U = \)\{UP, DOWN, LEFT, RIGHT\}, along with the optimal path (blue nodes) with cost function \eqref{eq:cell_cost}; (c) Path weights computed using \eqref{eq:path_weights_stand}.}
        \label{fig:example}
\end{figure}

\subsection{Decoder}
The decoder's primary role (see Figure \ref{fig:flowchart}) is to provide estimates for the vector \(x \in [0,1]^N\) containing the occupancy values of the cells of \(M\), where recall that \(N\) is the total number of cells in \(M\). 
To achieve this, it leverages both the past and present Seeker's measurements and Supporter's choices for abstractions. 

The Seeker's measurements as well as the Supporter's abstractions can be described by a set of linear equality and inequality constraints:
\begin{equation}\label{eq:constraints}
    C_t = \left\{
    x \in \Re^N:  
    \begin{matrix}
           \A_{0:t} x = o_{0:t} , \\
           0 \leq [x]_j \leq 1, \textrm{ } j = 1,\ldots,N
    \end{matrix}
    \right\}.
\end{equation}

Equation \eqref{eq:constraints} represents the intersection of a hyperplane with a hypercube in \(\Re^N\). 
Let \(k_t\) be the number of equality constraints in \eqref{eq:constraints} which is equal to the number of past and current Seeker's measurements and the past and current Supporter's abstractions, with redundant, linearly dependent equations removed. 
It becomes evident that at each timestep, the rows and elements of \(\A_{0:t} \in \Re^{k_t \times N}\) and \(o_{0:t} \in \Re^{k_t}\) might increase as new measurements and abstractions are added. 

In case the true values of certain elements of \(x\) are not perfectly known, it is imperative to establish a systematic method for computing estimates, utilizing the set \(C_t\). 
To provide such estimates, we assume that the occupancy vector \(x\) is a multivariate random variable following a distribution, known to both the Seeker and the Supporter.
Leveraging principles of stochastic estimation \cite{speyer2008}, we find the conditional expectation for each element of \(x\) within \(C_t\). 
This is achieved by identifying the point in \(C_t\) that minimizes the variance of the estimation error. 
Therefore, the vector \(\widehat{x}_t^{*}\) containing the estimates of \(x\) is given by: 
\begin{equation}\label{eq:decoder}
        \widehat{x}_t^{*} = \argmin_{  \widehat{x}_t \in C_t} \sum_{j = 0}^N \mathbb{E} [ [x]_j - [\widehat{x}_t]_j]^2.
    \end{equation}

Given a distribution for \(x\), \eqref{eq:decoder} can be transformed into a convex optimization problem with linear constraints.

\begin{proposition}
    Let $x$ follow the distribution $f(\cdot)$ with mean $\mu$ and covariance $\Sigma$. 
    Then, 
    \begin{align}\label{eq:decod_fin}
        \widehat{x}^*_t = \argmin_{  \widehat{x}_t \in C_t} \| \widehat{x}_t - \mu \|^2.
    \end{align}
\end{proposition}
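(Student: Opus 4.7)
The plan is a direct bias--variance decomposition. Inside the optimization in \eqref{eq:decoder}, the decision variable $\widehat{x}_t$ is deterministic, while the expectation acts only on the random vector $x$. First I would expand each summand as
\begin{equation*}
\mathbb{E}\bigl[([x]_j - [\widehat{x}_t]_j)^2\bigr] = \mathbb{E}[[x]_j^2] - 2[\widehat{x}_t]_j\,\mu_j + [\widehat{x}_t]_j^2,
\end{equation*}
and then substitute $\mathbb{E}[[x]_j^2] = \Sigma_{jj} + \mu_j^2$ so that the summand collapses to $\Sigma_{jj} + (\mu_j - [\widehat{x}_t]_j)^2$.

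Summing over $j = 1,\ldots,N$ gives
\begin{equation*}
\sum_{j=1}^{N} \mathbb{E}\bigl[([x]_j - [\widehat{x}_t]_j)^2\bigr] = \tr(\Sigma) + \|\widehat{x}_t - \mu\|^2,
\end{equation*}
and since $\tr(\Sigma)$ does not depend on the decision variable, removing it leaves the argmin unchanged. The feasible set $C_t$ is identical in both formulations, so \eqref{eq:decoder} and \eqref{eq:decod_fin} share the same minimizer, which is exactly the claim.

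There is no real obstacle here; the argument is a standard bias--variance identity. The only subtlety worth flagging is that the distribution $f(\cdot)$ enters the reduced problem only through its first two moments $(\mu,\Sigma)$, and in fact only through $\mu$, since $\Sigma$ contributes the constant that is discarded. Consequently the proposition holds for any distribution with finite mean and covariance, not only Gaussian, and the reformulation in \eqref{eq:decod_fin} is a convex quadratic program over the polyhedron $C_t$ defined in \eqref{eq:constraints}, as needed for the decoder to be tractable.
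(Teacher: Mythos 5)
Your proof is correct and follows essentially the same route as the paper: both reduce to the identity $\mathbb{E}[\|x-\widehat{x}_t\|^2] = \tr(\Sigma) + \|\widehat{x}_t-\mu\|^2$ and discard the constant $\tr(\Sigma)$, the only difference being that you derive it component-wise while the paper works in vector form with trace manipulations. Your closing remark that the result holds for any distribution with finite first two moments matches the paper's own observation following the proof.
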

\begin{proof}
Equation (\ref{eq:decoder}) can be equivalently written as:
    \begin{equation}\label{eq:decod_expect_val}
        \widehat{x}_t^{*} = \argmin_{  \widehat{x}_t \in C_t} \mathbb{E} [\| x - \widehat{x}_t\|^2].
    \end{equation}

The covariance matrix is defined as \(\Sigma = \mathbb{E} [xx^\top] - \mathbb{E} [x]\mathbb{E} [x]^\top\). 
Hence, the trace of \(\Sigma\) is given by:
    \begin{align*}
        \tr(\Sigma) =&  \tr(\mathbb{E} [xx^\top]) - \tr(\mathbb{E} [x]\mathbb{E} [x]^\top)\\
        =&  \mathbb{E} [\tr(x^\top x)] - \tr(\mathbb{E} [x]^\top\mathbb{E} [x])\\
        =& \mathbb{E}[\|x\|^2] - \|\mu\|^2,
    \end{align*}
where \(\tr(\cdot)\) denotes the trace of a matrix.

Notice that, 
    \begin{align*}
        \mathbb{E} [\| x - \widehat{x}_t\|^2] =& \mathbb{E}[\|x\|^2] + \|\widehat{x}_t\|^2 - 2 \mathbb{E}[x]^\top \widehat{x}_t \\
        =& \tr(\Sigma) + \|\mu\|^2 + \|\widehat{x}_t\|^2 - 2 \mu^\top \widehat{x}_t \\
        =& \tr(\Sigma)+ \| \widehat{x}_t - \mu \|^2.
    \end{align*}
    
    Thus, 
    \begin{align*}
        \widehat{x}^*_t = \argmin_{  \widehat{x}_t \in C_t}  \mathbb{E} [\| x - \widehat{x}_t\|^2]  = \argmin_{  \widehat{x}_t \in C_t} \| \widehat{x}_t - \mu \|^2.
    \end{align*}
\end{proof}

Given that $C_t$ is a convex set (polyhedron) and the objective function is quadratic, the optimization problem is convex for every distribution $f(\cdot).$ It is further noteworthy that the optimal solution $\widehat{x}^*_t$ depends only on the mean of the distribution $f(\cdot)$.
In this work, we set $\mathbb{E}[[x]_j] = 0.5$, to help the decoder in its estimation since it lacks prior information about the environment and $[x]_j$ is bounded between $0$ and $1$.
Therefore:
\begin{align}\label{eq:decod_paper}
     \widehat{x}^*_t = \argmin_{  \widehat{x}_t \in C_t} \| \widehat{x}_t -  \frac{1}{2} \mathbf{1}\|^2,
\end{align}
where $\textbf{1}$ is a vector of all ones.

\subsection{Path Converter}
The Seeker sends its current optimal path $\pi^*_t$ obtained from \eqref{eq:path} to the Supporter.
The Supporter utilizes this information to guide its abstraction selection process. 
This is achieved by assigning weights to each cell within \(M\) based on its proximity to the path. 
To compute these weights, we employ a normalized Gaussian function:
\begin{equation}\label{eq:path_weights_stand}
    w_t(\textbf{p}) = \max_{\textbf{p}_{\pi} \in \pi^*_t} e^{-\frac{\|\textbf{p}-\textbf{p}_{\pi}\|^2}{2\sigma^2}}, \qquad \textbf{p} \in M,
\end{equation}
where \(\sigma\) is a parameter characterizing the width of the curve around the path. 

Figure \ref{fig:example}(\subref{fig:ex_weights}) illustrates the path weights after applying \eqref{eq:path_weights_stand} to the example in Figure \ref{fig:example}(\subref{fig:ex_environ}).

\subsection{Encoder}

\begin{algorithm}[tb]
\caption{The Encoder's Algorithm}\label{alg:encoder}
\hspace*{\algorithmicindent} \textbf{Input: }{$W_t$, $\A_{0:t-1}$, $o_{0:t-1}$, $\textbf{p}_{A,0:t}$, $\textbf{p}_{B,0:t}$} \\
\hspace*{\algorithmicindent} \textbf{Output: } {$\theta^{*}_t$, $o_{t}$}
\begin{algorithmic}[1]
    \ForAll {$\textbf{p} \in (L_{B,t} \cap M_{A,t}) \cup (L_{A,t} \cap M_{B,t})$}
        \State $(\A,o) \leftarrow \textsc{Update}(\A_{0:t-1},o_{0:t-1})$
    \EndFor
    
    \ForAll{$\theta \in \Theta$}
        \State $(\A^{\theta}_t,o^{\theta}_t) \leftarrow $ \textsc{Abstractions}($L_{B,t}, \theta$)
        
        \State $(\A_{0:t},o_{0:t}) \leftarrow$ \textsc{Independent}
            $\left( \begin{bmatrix}
                \A & o \\
               \A^{\theta}_t & o^{\theta}_t
            \end{bmatrix} \right)$
        
        \State $\widehat{x}_t^{*} \leftarrow$ \textsc{Decoder}$(\A_{0:t},o_{0:t})$ using (\ref{eq:decod_paper})
        \State $J_t(\theta) \leftarrow$ \textsc{ComJ}($W_t$, $\widehat{x}_t^{*}$, $\widetilde{x}_t(\textbf{p}_{B,0:t})$, $\lambda$) using (\ref{eq:encoder_a})
    \EndFor
    \State $\theta^{*}_t \leftarrow  \argmin J_t$
    \State $o_t \leftarrow  o^{\theta^{*}_t}_t$
    \State \Return $\theta^{*}_t$, $o_t$

\end{algorithmic}
\end{algorithm}

\begin{figure*}[!t]
    \centering   
     \begin{subfigure}[b]{0.20\linewidth}
         \centering
         \includegraphics[width=\textwidth]{./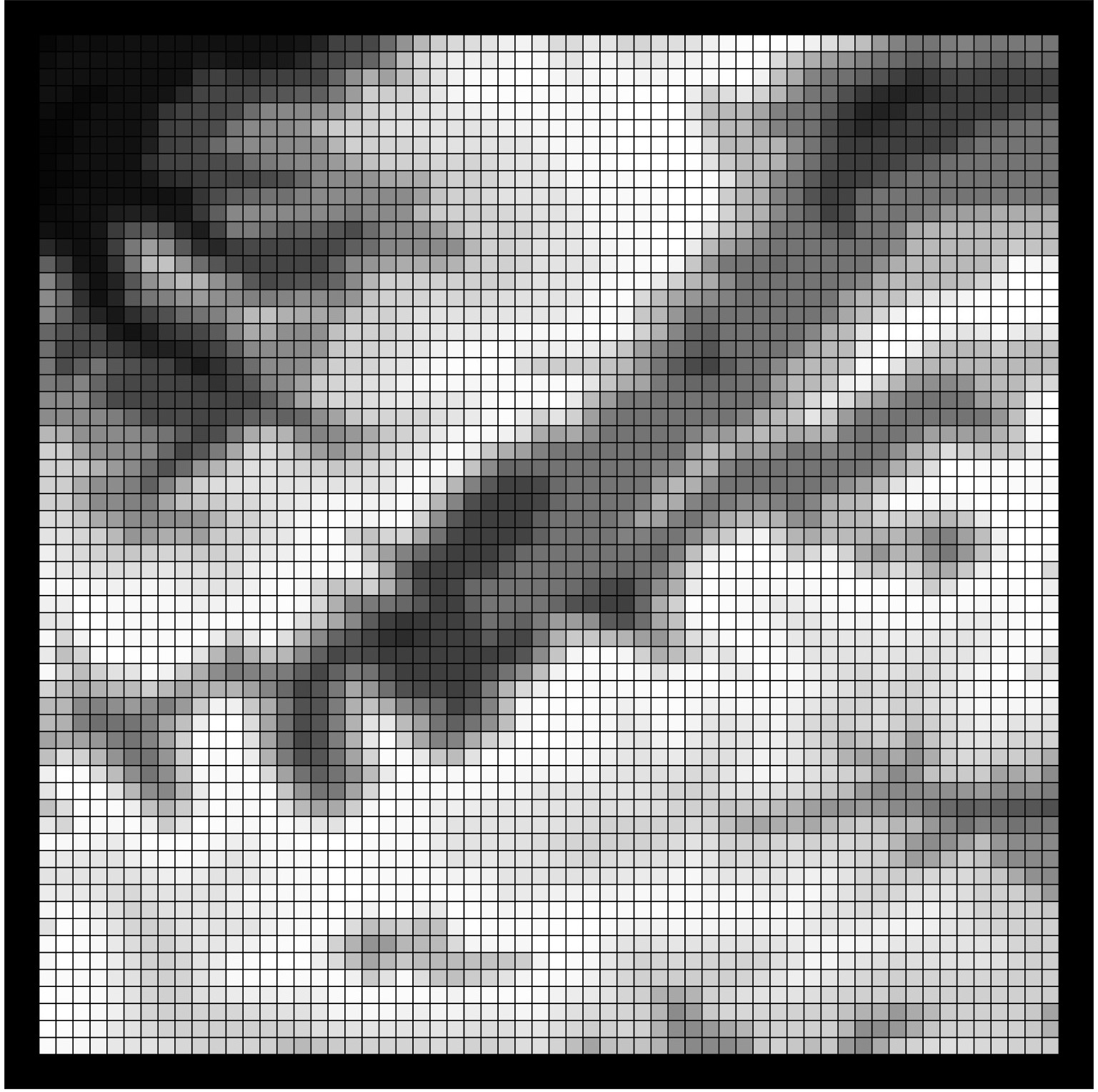}
         \caption{Realistic Map}
         \label{fig:real_map}
     \end{subfigure}
     \hspace{1mm}
     \begin{subfigure}[b]{0.20\linewidth}
         \centering
         \includegraphics[width=\textwidth]{./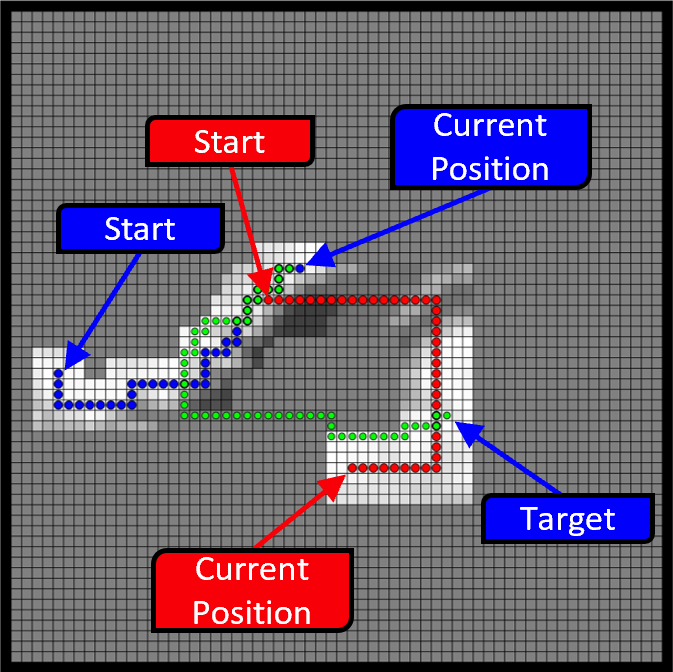}
         \caption{FI framework ($t=40$)}
         \label{fig:frame1_40}
     \end{subfigure}
     \hspace{1mm}
     \begin{subfigure}[b]{0.20\linewidth}
         \centering
         \includegraphics[width=\textwidth]{./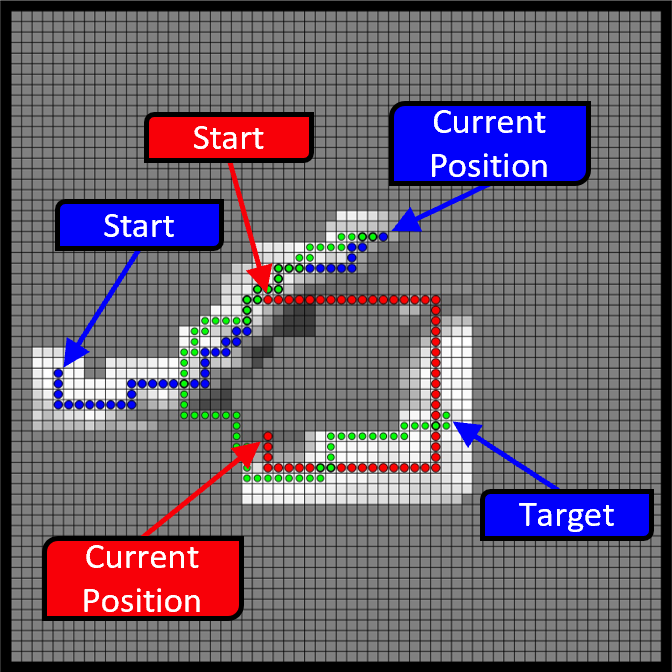}
         \caption{AS framework ($t=51$)}
         \label{fig:frame2_51}
     \end{subfigure}
    
     \begin{subfigure}[b]{0.20\linewidth}
         \centering
         \includegraphics[width=\textwidth]{./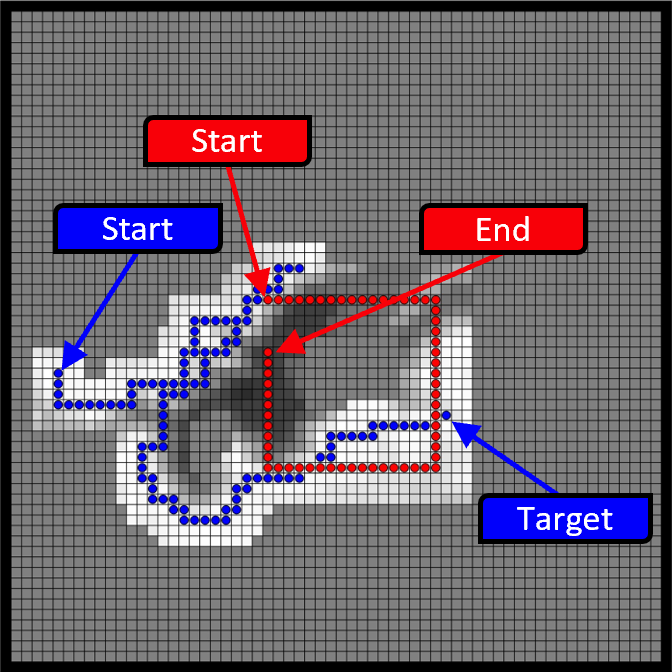}
         \caption{FI framework (final)}
         \label{fig:frame1_final}
     \end{subfigure}
    \hspace{1mm}
     \begin{subfigure}[b]{0.20\linewidth}
         \centering
         \includegraphics[width=\textwidth]{./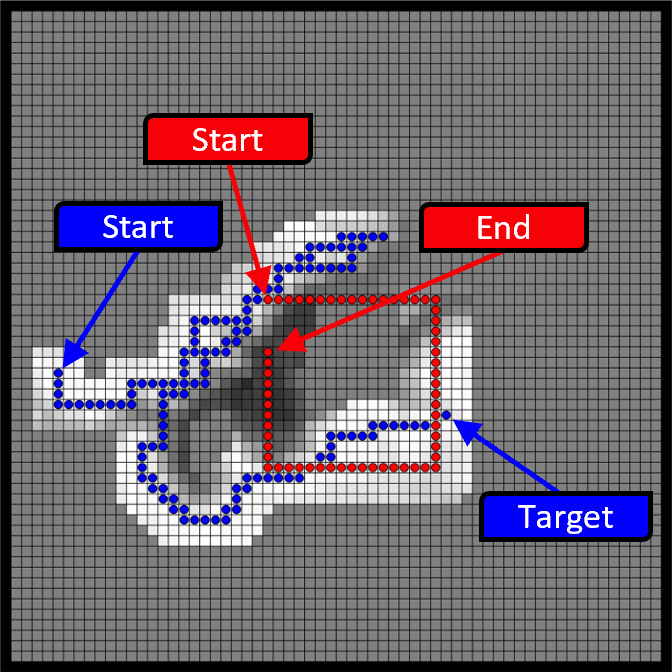}
         \caption{AS framework (final)}
         \label{fig:frame2_final}
     \end{subfigure}
    \hspace{1mm}
     \begin{subfigure}[b]{0.20\linewidth}
         \centering
         \includegraphics[width=\textwidth]{./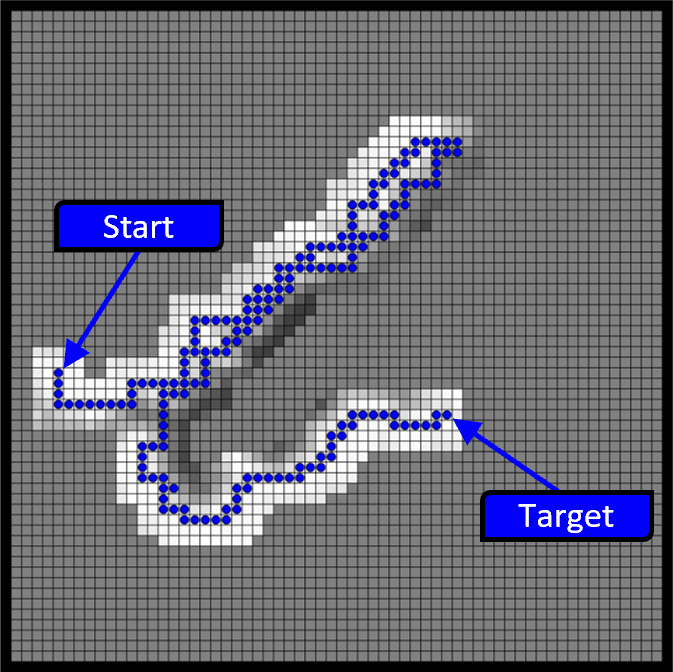}
         \caption{U framework (final)}
         \label{fig:frame3_final}
     \end{subfigure}
    \caption{Simulation example using the realistic map. Figures (b)-(f) illustrate the Seeker's estimated map at different instances. The Supporter's path is illustrated with red and its initial position is \(\textbf{p}_{B,0} = (26,36)\). The cells that the Seeker has already traversed are presented with blue, while green is the current path constructed by its path-planning algorithm.}
    \label{fig:real_map_sim}
\end{figure*}

The encoder's role is to select the optimal abstraction from a given set, to transmit to the Seeker. 
This selection is conducted with a focus on both navigation and communication aspects, considering the path weights \(w_t(\textbf{p})\) and penalizing abstractions based on their required transmission bandwidth.

Let \(\theta \in \Theta\) denote an abstraction, where \(\Theta\) is defined in Section~\ref{sec:comm_of_abs}. 
The optimal abstraction at each timestep \(t\) is derived through the minimization of the following criterion:
\begin{subequations}\label{eq:encoder}
    \begin{eqnarray}
         & J_t(\theta_{0:t}) = \| W_t \circ (\widetilde{x}_t-\widehat{x}_t^{*}(\theta_{0:t}))\|^2 + \lambda(\theta_t), \label{eq:encoder_a}\\
         & \theta^*_t = \argmin_{\theta_t \in \Theta} J_t(\theta_{0:t}), \label{eq:encoder_b}
    \end{eqnarray}
\end{subequations}
where \(W_t\) is the vector containing the weights \(w_t(\textbf{p})\) of every cell, \(\widetilde{x}_t \in \Re^N\) is the total sensed occupancy grid by the Supporter until timestep \(t\),  \(\widehat{x}_t(\theta_{0:t}) \in \Re^N\) is the estimation vector (depending on the history of abstractions), \(\circ\) is the Hadamard product, and \(\lambda(\theta_t)\) is the communication/bandwidth cost assigned to each abstraction. 
Similar to previous works in classical control \cite{maity_quant}, we do not impose any specific structure on \(\lambda(\theta_t)\), while considering that the cost is proportional to the number of compressed cells in an abstraction. 
This approach enables us to capture the idea of penalizing abstractions based on their resolution.

It is important that the Supporter's encoder selects the optimal abstraction for the specific Seeker's decoder (see Figure \ref{fig:flowchart}). 
Therefore, the estimation vector \(\widehat{x}_t(\theta_{0:t})\) is computed using (\ref{eq:decoder}) with the constraint set given by:
\begin{equation}\label{eq:constraints_encoder}
    C^{\theta}_t = \left\{
    x \in \Re^N:  
    \begin{matrix}
       \begin{bmatrix}
    \A \\
    \A^{\theta}_t
\end{bmatrix} 
         x =                \begin{bmatrix}
    o \\
    o^{\theta}_t
\end{bmatrix} 
        \\
        0 \leq [x]_{j} \leq 1, \textrm{ } j = 1,\ldots,N
    \end{matrix}
    \right\},
\end{equation}
where $\A^{\theta}_t$ and $o^{\theta}_t$ denote the matrix of the candidate abstraction \(\theta\) and the occupancy values, respectively, at timestep $t$; see Section~\ref{sec:preliminaries}, and \((\A,o)\) is described in the next paragraph.

The encoder's algorithm is given in Algorithm \ref{alg:encoder}. Here \(L_{A,t} = L_{A,t}(\textbf{p}_{A,t}) 	\subseteq M\) and \(L_{B,t} = L_{B,t}(\textbf{p}_{B,t}) \subseteq M\) are sets that contain the cells of the Seeker and Supporter's current local map respectively, and \(M_{A,t} = M_{A,t}(\textbf{p}_{A,0:t}) \subseteq M\) and \(M_{B,t}(\textbf{p}_{B,0:t}) \subseteq M\) are sets that contain all the sensed/finest resolution cells from timestep \(0\) to \(t\) of the Seeker and the Supporter respectively.
Lines 1-3 in Algorithm \ref{alg:encoder} incorporate the values of the finest resolution cells of the Supporter's local map that have already been measured by the Seeker directly to the set \(C^{\theta}_t\). 
These cells are excluded from the abstraction process. 
Moreover, they inform the Supporter if the Seeker has just measured cells previously included in transmitted abstractions, and add them to the set \(C^{\theta}_t\).
Thus, the function \(\textsc{Update}\) adds the elements corresponding to these measurements to \(\A_{0:t-1}\) and \(o_{0:t-1}\).
Line 5 uses the function \(\textsc{Abstractions}\) to compute the pair \((\A^{\theta}_t,o^{\theta}_t)\) by applying abstraction \(\theta\) to \(L_{B,t}\), while Line 6 uses the function  \(\textsc{Independent}\) to concatenate all the measurements and uses a method (i.e., Gaussian elimination) to exclude linear dependent equations. 
Lines 4-9 perform an exhaustive search to find the optimal abstraction, as the available set of abstractions is relatively limited. 
In future work, we plan to leverage the dependence of different abstractions, to increase the set of abstractions, and apply a more sophisticated search.

\section{Experiments}

In this section, we present the simulation results to validate the effectiveness of our framework. 
We conducted 500 simulations on each of two different 2D maps: a realistic map (\(64 \times 64\)) with probabilistic occupancy values (Figure \ref{fig:real_map_sim}(\subref{fig:real_map})), and a maze (\(30 \times 30\)) with deterministic values (Figure \ref{fig:maze_map}).
A single pair of a Seeker and a Supporter is employed for both maps, with both robots initiating movement simultaneously and traversing one cell per timestep. 
The Supporter has the ability to move over obstacles (i.e., it is an aerial vehicle, e.g., a surveillance drone).
The Seeker's local map \(L_{A,t}\) size is \(5 \times 5\) cells in the first scenario and \(3 \times 3\) cells in the second. 
Furthermore, the Supporter has a field of view \(L_{B,t}\) of \(7 \times 7\) cells in both scenarios. 
Both robots are positioned at the center of their respective local maps.

In the first scenario, we tested different Supporter's predefined paths while maintaining the same initial and final positions of the Seeker. 
In the second scenario, we vary the initial and final positions of the Seeker while retaining the same predefined path.
The Seeker's Path Planner uses the cell cost given in (\ref{eq:cell_cost}) with values of the constants \(a = 0.025\) and \(\epsilon = 0.501\).
The Supporter's encoder utilizes a finite set of 10 abstractions as shown in Figure \ref{fig:abstr_set} and with values of the parameters in \eqref{eq:path_weights_stand} \(\sigma = 10\) and \(\sigma = 3.33\) for the real-world-like and maze scenarios, respectively.

\subsection{Performance Metrics}
We compare our (Abstraction Selection - AS) framework  with two alternatives: a Fully-Informed (FI) framework and an Uninformed (U) framework.
In the FI framework, the Supporter transmits all the new measurements contained in \(L_{B,t}\) at each timestep \(t\).
In the U framework, the Seeker reaches its destination without assistance from the Supporter.

As explained in Section \ref{sec:path_planner}, we assumed that the time required to traverse a cell is proportional to the cell's cost. 
Hence, the total time taken by the Seeker to reach its target is proportional to the accumulated cost \(\mathcal{C}\).
Let \(\pi_f\) include the cells that the Seeker traversed until it reached its target, without excluding duplicate cells.
Hence, \(\mathcal{C}\) is given by:
\begin{equation}\label{eq:acc_cost}
    \mathcal{C} = \sum_{\textbf{p} \in \pi_f}{c_{\epsilon} (\textbf{p})}.
\end{equation}
We assess our framework's effectiveness in assisting the Seeker to reach its destination, by calculating the total number of simulations in which the Seeker had the highest \(\mathcal{C}\), in comparison to the other two frameworks, and we classify these simulations as \textquotedblleft failure\textquotedblright.
Meanwhile, we classify simulations that resulted in the same \(\mathcal{C}\) as \textquotedblleft neutral\textquotedblright.

We also compute the average time ratio: 
\begin{equation}\label{eq:bit_ratio}
    r_{\textrm{time},i} = \frac{1}{n_{\textrm{sim}}}\sum_{s = 0}^{n_{\textrm{sim}}}\frac{\mathcal{C}_{i}(s)}{\mathcal{C}_o(s)},
\end{equation}
where \(n_\textrm{sim}\) is the total simulation number, \(\mathcal{C}_o(s)\) is the accumulated cost by using the optimal framework at simulation \(s\), \(i\) is the framework index (i.e., \(i=1\) for FI, \(i=2\) for AS, and \(i=3\) for U framework), and \(\mathcal{C}_i(s)\) is the accumulated cost of framework \(i\) at simulation \(s\).

Additionally, we evaluated the performance of our framework in reducing the amount of information sent at each timestep, by calculating the average ratio of bits sent by our framework and the bits sent by the FI framework:
\begin{equation}\label{eq:bit_ratio2}
    r_{\textrm{bits}} = \frac{1}{n_{\textrm{sim}}} \sum_{s = 0}^{n_{\textrm{sim}}} \frac{\sum_{t = 0}^{T_{B,\theta}(s)}{n_{\theta,t}}(s)}{\sum_{t = 0}^{T_{B,f}(s)}{n_{f,t}}(s)},
\end{equation}
where \(n_{\theta,t}(s)\) is given in (\ref{eq:bit_abstr}) and denotes the bits sent by our framework's Supporter using abstraction \(\theta\) at timestep \(t\) and simulation \(s\), \(n_{f,t}(s)\) are the bits sent by the FI framework at timestep \(t\) and simulation \(s\), and \(T_{B,\theta}(s)\) and \(T_{B,f}(s)\) is the time horizon that the Supporter transmits information at simulation \(s\) for the AS and FI framework respectively. 
The parameter values in \eqref{eq:bit_abstr} are \(n_m = 12\), \(n_i = 4\).

\begin{figure}[tb]
    \centering        
    {\includegraphics[width=0.62\linewidth]{./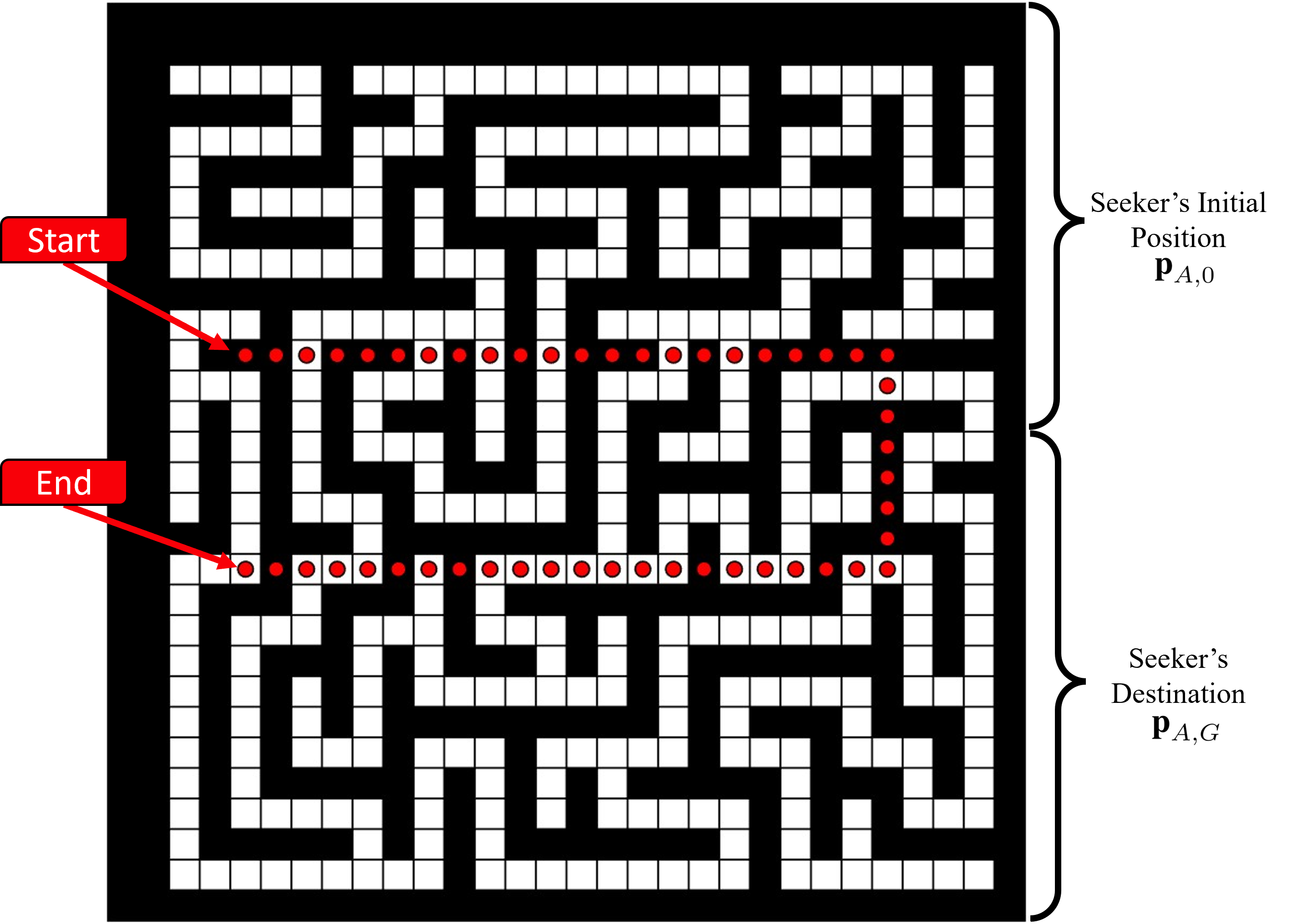}}
    \caption{Maze Map. The Supporter's path is defined with red.}
    \label{fig:maze_map}
\end{figure}

\begin{figure}[tb]
    \centering        
    {\includegraphics[width=0.6\linewidth]{./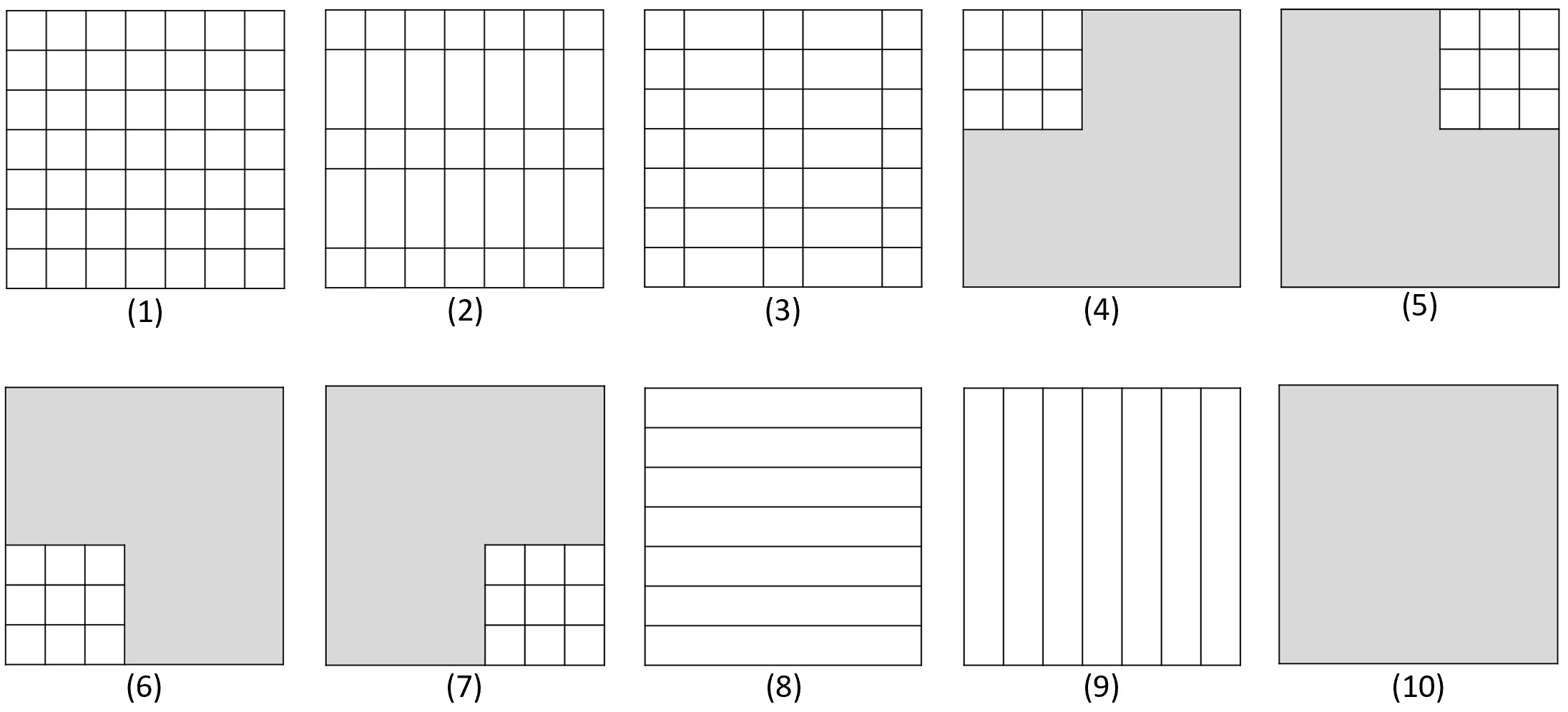}}
    \caption{Set of Abstractions used for the simulations. The grey area denotes no information about these cells}
    \label{fig:abstr_set}
\end{figure}

\subsection{Variation of Supporter's Path}
In the realistic map environment (Figure \ref{fig:real_map_sim}(\subref{fig:real_map})), we conducted 500 simulations with different predefined paths of the Supporter. 
The Seeker's initial position is \(\textbf{p}_{A,0} = (6,29)\) and the destination is \( \textbf{p}_{A,G} = (43,25)\). 
The Supporter's initial position on the map is arbitrary and determines which of the four different paths it will follow with \(T_{B,\max} = 60\).

Table \ref{tab: real_map} presents the results. 
We conclude that the FI framework produced the highest \(\mathcal{C}\) for the smallest number of simulations. 
However, we also observe that there were 28 simulations where the FI framework had the highest \(\mathcal{C}\). 
This means that information might not always be beneficial for the Seeker but, on the contrary, it might be misleading. 
Informing the Seeker about obstacle-free areas that lead to potential dead-ends may cause it to enter these areas. Conversely, alerting the Seeker to blocked areas may lead it to mistakenly avoid regions with clear paths nearby.

In conclusion, our framework, on average, increased the time by 26.9\% whereas the FI framework increased it by 2.2\%, and the U framework by 141.0\%. 
However, our framework also achieved a 62.7\% reduction in transmitted information, while, on average, maintained a satisfactory performance, comparing to the other two alternatives.

Figure \ref{fig:real_map_sim} illustrates one of the 500 conducted simulations on the real-world-like environment. In this example, we observe the effectiveness of our framework (Figure \ref{fig:real_map_sim}(\subref{fig:frame2_51})) in providing the Seeker with information that prompts it to change direction and follow the correct path faster than the U framework, resulting in a shorter time. Furthermore, the FI framework (Figure \ref{fig:real_map_sim}(\subref{fig:frame1_40})) informs the Seeker to change its path more rapidly than both of the other frameworks.

\begin{table}[!t]
\caption{Realistic Map Results}
\label{tab: real_map}
\centering
\scriptsize
\begin{tabular}{ |c|c|c|c| } 
    \hline
    \textbf{Framework} & Fully-Informed & Abstraction Selector & Uninformed \\ 
    \hline
    \textbf{failures} & 28 & 84 & 365 \\ 
    \hline
    \textbf{neutral} & 11 & 11 & 11 \\ 
    \hline
    \(r_\textrm{time}\) & 1.022 & 1.269 & 2.410 \\ 
    \hline
    \(r_\textrm{bits}\) & - & 0.373 & - \\ 
    \hline
\end{tabular}
\end{table}

\subsection{Variation of Seeker's Initial and Final Positions}
In the maze map (Figure \ref{fig:maze_map}), we run 500 simulations for different initial position of the Seeker \(\textbf{p}_{A,0}\) and the target \(\textbf{p}_{A,G}\), while keeping the Supporter's path the same. 
The initial position of the Supporter is \(\textbf{p}_{B,0} = (5,19)\) and \(T_{B,\max} = 49\).

Table \ref{tab: maze_map} presents the simulation results. Our framework, on average, increased time by 14.8\% whereas the FI framework increased it by 5.5\%, and the U framework by 34.3\%.
Nonetheless, our framework also managed to reduce transmitted information by 43.6\%, while increasing the time only by 9.3\%, compared to the optimal FI framework.

\begin{table}[!t]
\caption{Maze Results}
\label{tab: maze_map}
\centering
\scriptsize
\begin{tabular}{ |c|c|c|c| } 
    \hline
    \textbf{Framework} & Fully-Informed & Abstraction Selector & Uninformed \\ 
    \hline
    \textbf{failures} & 28 & 59 & 306 \\ 
    \hline
    \textbf{neutral} & 77 & 77 & 77 \\ 
    \hline
    \(r_\textrm{time}\) & 1.055 & 1.148 & 1.343 \\ 
    \hline
    \(r_\textrm{bits}\) & - & 0.564 & - \\ 
    \hline
\end{tabular}
\end{table}

\section{Conclusions}
This paper addresses the challenge of determining the optimal information compression for communication in the context of mobile robot path-planning.
We assume a team of mobile robots that compress their local maps to assist another robot reach a destination in an unfamiliar environment.
In contrast with existing methods, our framework does not require prior knowledge of the environment and is effective for various robot configurations and map sizes.
Simulation results validate the effectiveness of our framework. 
On average, our framework reduced the amount of information by approximately 50\% while maintaining satisfactory performance.
In the future, we plan to extend our framework to the multi-robot path-planning problem. 
We also intend to design a more sophisticated search method, utilizing abstraction dependence, to increase the set of abstractions.

\bibliographystyle{IEEEtran}
\bibliography{IEEEabrv,refs,Maity}

\end{document}